\newtheorem{thm}{Theorem}[section]
\newtheorem{lemma}[thm]{Lemma}
\newtheorem{prop}[thm]{Proposition}
\newtheorem{corollario}[thm]{Corollary}
\theoremstyle{definition}
\newtheorem{defn}[thm]{Definition}
\theoremstyle{remark}
\newtheorem{esempio}[thm]{Example}
\newtheorem{oss}[thm]{Remark}
\numberwithin{equation}{section}
\def\Rset{\mathbb{R}}
\def\Nset{\mathbb{N}}
\newcommand{\E}{\mathbb{E}}
\newcommand{\Pb}{\mathbb{P}}
\newcommand{\cov}{\operatorname{Cov}}
\begin{document}

\title[Expected signature for time series]{Gaussian processes based data augmentation and expected signature for time series classification}
\author[M. Romito]{Marco Romito}
  \address{Dipartimento di Matematica, Universit\`a di Pisa, Largo Bruno Pontecorvo 5, I--56127 Pisa, Italia}
  \email{\href{mailto:marco.romito@unipi.it}{marco.romito@unipi.it}}
\author[F. Triggiano]{Francesco Triggiano}
  \address{Scuola Normale Superiore, Piazza dei Cavalieri, 7, 56126 Pisa, Italia}
  \email{\href{mailto:francesco.triggiano@sns.it}{francesco.triggiano@sns.it}}
\date{October 16, 2023}
\begin{abstract}
  The signature is a fundamental object that describes paths (that is, continuous functions from an interval to a Euclidean space). Likewise, the expected signature	provides a statistical description of the law of stochastic processes. We propose a feature extraction model for time series built upon the expected signature. This is computed through a Gaussian processes based data augmentation. One of the main features is that an optimal feature extraction is learnt through the supervised task that uses the model.
\end{abstract}
\maketitle

\section{Introduction}

Whenever we are dealing with structured data, such as images or time series, we need to deploy some feature extraction mechanisms in order to solve both supervised and unsupervised tasks. There are various well-known time series features extraction approaches, such as \emph{Catch22} \cite{LSKSFJ2019} or  \emph{Tsfresh} \cite{CBN2018}. Recently, the \emph{signature} of a time series has emerged as a universal non-parametric descriptor of a stream of time ordered data \cite{LyoMcl2022,CK2016}. Signature has arisen in the context of rough path theory \cite{LCL2007,FriVic2010,FriHai2014}, a theory initially developed to analyse irregular signals, and to construct solutions of differential equations driven by such irregular signals. The signature has shown to be a powerful tool to capture the peculiarities of path-like data. In particular, it is able to characterize any path up to adding the time component \cite{HL2010} (see also \cref{Prop4}) and a universal approximation theorem holds \cite{Arribas2018} (see \cref{UnivApprox}).

Lately, the signature transform has been used as features extraction mechanism in neural network-like models \cite{KASL2019} and has been integrated in kernel-based models \cite{KO2019,TotObe2019}. Moreover, signature-based models have been applied in various scientific domain, such as anomaly detection \cite{AGTZ2022} and handwritten text recognition \cite{XSJNL2017}.

Here we are interested in analysing problems where the supervised task should be able to capture the values as well as the statistical features of input data. We have in mind problems such as classification of ECG traces, or of the motion of single cells, see for instance \cite{DSSA2023,VFGJ2018}. We believe that, as the signature is able to give a non-parametric description of paths, the expected signature \cite{Ni2012,CheLyo2016,CheObe2022} is the suitable object to extract features from distributions on paths: the expected signature is a well-suited transform whenever a time series is thought as a trajectory of a stochastic process thanks to its capability of identifying the law of various random processes \cite{CheObe2022}.

In this work we propose a new time series classification model that uses the expected signature as feature extraction procedure. Our method is not simply an architecture of two models placed in series, namely the feature extraction through the evaluation of the expected signature, followed by the classification task based on these features. In our architecture the two models interact and our algorithm learns an optimal evaluation of the expected signature through the feedback of the classification task.
The proposed new feature extraction approach can be compared to the convolutional layers when working on images \cite{gu2018}. Indeed, in both cases the features extraction procedure is learnt by the model itself based on the prediction task at stake. So, it should be deployed to solve supervised tasks, such as classification.

The model combines two main ideas. The first is a stochastic data augmentation based on a Gaussian process regression model. The second idea is to capture the relevant features of paths by means of the expected signature, computed over the ensemble obtained in the phase of data augmentation.

While our main achievement is to combine these two ideas, they both appear, in some way but never together, in previous literature. In \cite{LSDBL2020} the expected signature is used in order to solve supervised tasks where any data is a labelled family of time series. In contrast in our model the expected signature is used in order to characterize the ensemble generated from a single time series through the augmentation step. In \cite{MHBBR2020} a Gaussian laws based augmentation is combined with the signature transform. Their model differs from ours in two fundamental aspects. First, they select mean and covariance functions as in a classical Gaussian processes regression model, while our model learns these quantities completely on its own. More details on this are given in \cref{dataAug}. Second, they do not need the expected signature because randomness is ruled out by passing only the posterior mean and/or variance.

In conclusion, in the present work,
  \begin{enumerate}
    \item we show that the expected signature is an effective tool for supervised tasks, for instance classification, involving time series, when one wants to capture the statistical features of the series and exploit them for better accuracy;
    \item we develop a \emph{data augmentation/Gaussian Process regression/computation of expected signature} module that is fully compatible with back-propagation, and thus can be seamlessly integrated in any neural network architecture;
    \item we find that signature normalization, which is a crucial step to ensure that signature fully captures the statistical properties of paths, turns out to be crucial to ensure computational stability in the evaluation and use of signature;
    \item we show that the proposed method is effective in a number of benchmark examples.
  \end{enumerate}

The paper is organised as follows. In \cref{s:architecture} we describe how to use the expected signature feature extraction module in a simple classification task. The module is flexible and can be used in more complex classification tasks, as well as regression problems. We discuss some experimental results in \cref{s:results}. The algorithm is analysed both on synthetic and real world datasets, a description of the datasets we have used is given in \cref{s:ImpDetails}. Finally, in \cref{s:appendix} we outline the theoretical background and prove some additional results.

\section{Model Architecture}\label{s:architecture}

In this section we propose an extended description of the easiest classification model that can be built using our new feature extraction approach and point out various possible architecture variations.

\subsection{Preliminaries}

A time series that describes a phenomenon extended in time and that includes the influence of random components can be thought as a set of values sampled from a trajectory of a stochastic process.
We preliminary introduce a series of ideas and notion that aim at modeling the description of time-extended phenomena with random components and that will help in illustrating our method.

\subsubsection{Signature of paths and processes} \label{2.1.1}

We start with a short introduction of the signature of a path and of the expected signature of a stochastic process. Technical details are given in \cref{s:appendix}. The interest is in paths, that is continuous function, that in general have poor properties of regularity, and so can be thought of functions with strong degrees of oscillations. The signature is a universal object that describes the intrinsic nonlinear nature of the path and the response of systems governed by the path. The signature of a path $(X_t)_{t\in[0,T]}$ is composed by the set of all iterated integrals,
\[
  S(X)^n = 
    \int\dots\int_{0<s_1<s_2<\dots<s_n<T}\,dX_{s_1}\otimes\dots\otimes dX_{s_n},
\]
with the convention that $S(X)^0=1$.
The expected signature of a stochastic process, which is nothing else but the family of expectations of all iterated integrals of the process, seen as a path, is able to characterize, in a large number of interesting cases, the law of the random signature, and in turns the statistical properties of the process. More precisely, the expected signature characterizes the law of the process only if properly normalized. A (tensor) normalization $\lambda$ is simply a function that for a given signature $S=(1,s^1,s^2,\dots)$ returns the new element $(1,\lambda(S)s^1,\lambda(S)^2 s^2,\dots)$. Tensor normalization is fully illustrated in \cref{ExpSig}.

\subsubsection{Gaussian Process regression model} \label{GP}

We briefly introduce the idea of the Gaussian Process regression model. A full description can be found
in \cite{WR2006}. First, recall that a Gaussian process is a family of random variables $(X_t)_{t \in [0,T]}$ such that all finite dimensional time-marginals have a joint Gaussian distribution. The law of a Gaussian process is completely determined by its mean and covariance functions, namely $m(t)=\E[X_t]$ and $R(s,t)=\cov(X_s,X_t)$.

Given a time series $x=(x_{t_i})_{i=1}^N$, we look for a function or a set of functions that might have possibly generated the known data and that can be used for interpolating the series at unknown time instants. Let $m(t)$ and $R(s,t)$ be a mean and a covariance function, the corresponding Gaussian process induces a prior distribution over the set of functions. Roughly speaking, this choice reduces the functions that we are taking into account.

By binding together the data and the prior distribution we obtain a set of possible interpolating functions and their likelihood of having generated the time series. Indeed, the possible values assumed by an interpolating function at unknown time instants $(s_j)_{j=1}^M$ and the likelihood of each possible set of values for that time instants are described by the conditional law, that is the Gaussian law with mean and covariance given respectively by
\[
  \begin{gathered}
    m(\overline{s})
      + R(\overline{s},\overline{t}) R(\overline{t},\overline{t})^{-1}(x-m(\overline{t})),\\
    R(\overline{s},\overline{s})
      - R(\overline{s},\overline{t})R(\overline{t},\overline{t})^{-1}R(\overline{t},\overline{s}),
  \end{gathered}
\]
where $m(\overline{s})=(m(s_1),..,m(s_M))$ and
\[
  R(\overline{s},\overline{t})=
  \begin{bmatrix}
    R(s_1,t_1) &\dots &R(s_1,t_N)\\
    \vdots &\ddots &\vdots \\
    R(s_M,t_1) &\dots &R(s_M,t_N)\\
  \end{bmatrix}.
\]

Typically the structure of the mean and covariance functions is chosen a-priori, for instance the square exponential covariance function $R(s,t)=\sigma \exp(-\frac1{2l^2}{(t-s)^2})$, and it remains only to estimate the parameters, which are $\sigma$ and $l$ in the squared exponential covariance function case.

\subsection{The model architecture}\label{arch}

The architecture of the simplest model deploying the module implementing the optimal evaluation of the expected signature subject to the accuracy of the classification task is made of two main parts. 

The first element is constituted of three layers and produces a sample of $K$ time series evaluated in a set of new times. The new series are sampled through a GP regression model, whose mean and covariance functions are parameters of the architecture.

The second element is made of four layers, takes as input the sample of $K$ new time series and evaluates the (normalised) expected signature.

A graphical representation of the architecture is shown in \cref{fig0}. In the following we describe the elements of the model in full details.

\subsubsection{Data augmentation} \label{dataAug}

We turn to the description of the first element, whose role essentially is to perform data augmentation and sampling of $K$ new time series which are coherent with the initial input.
This element is made of three layers. The input layer receives the time series values and the corresponding time instants, $(x_i,t_i)_{i=1}^N$, and the sequence $(s_i)_{i=1}^M$ of new time instants.
The first hidden layer receives a vector $m$ and a lower triangular matrix $V$, the output of a linear transformation. They should ideally represent respectively the mean and the square root of the covariance of the conditional law of $X_{s_1},\dots,X_{s_M}$ given $X_{t_1}=x_1,\dots,X_{t_N}=x_N$, that is
\[
  (X_{s_1},\dots,X_{s_M})|(X_{t_1}=x_1,\dots,X_{t_N}=x_N)
    \sim N(m,VV^T).
\]
The third layer, finally, produces $K$ series sampled from $N(m,VV^T)$. More precisely, it samples $K$ time series at time instants $(s_i)_{i=1}^M$.

We emphasize that the outcome of this part provides a much richer ensemble of time series than the original input, that nevertheless is coherent with the original time series.

\begin{figure}
\centering
\begin{minipage}{0.98\textwidth}
\begin{tikzpicture}
\draw (0,0) rectangle (0.8,0.7) node[pos=0.5] {$t_1$};
\draw (0.8,0) rectangle (3,0.7) node[pos=0.5] {...};
\draw (3,0) rectangle (3.8,0.7) node[pos=0.5] {$t_N$};
\draw [->] (1.9,0.7) -- (3.5,1.3);
\draw [->] (1.9,0.7) -- (8,1.3);
\draw (4.3,0) rectangle (5.1,0.7) node[pos=0.5] {$s_1$};
\draw (5.1,0) rectangle (7.3,0.7) node[pos=0.5] {...};
\draw (7.3,0) rectangle (8.1,0.7) node[pos=0.5] {$s_M$};
\draw [->] (6.3,0.7) -- (4,1.3);
\draw [->] (6.3,0.7) -- (8.5,1.3);
\draw (8.6,0) rectangle (9.4,0.7)node[pos=0.5] {$x_1$};
\draw (9.4,0) rectangle (11.6,0.7) node[pos=0.5] {...};
\draw (11.6,0) rectangle (12.4,0.7) node[pos=0.5] {$x_N$};
\draw (12.5,-0.1) -- (12.8,-0.1);
\draw (12.8,3.6) -- (12.8,-0.1)node[pos=0.5,sloped,above] {data augmentation};
\draw (12.8,3.6) -- (12.5,3.6);
\draw [->] (10.8,0.7) -- (4.5,1.3);
\draw [->] (10.8,0.7) -- (9,1.3);
\draw (2,1.4) rectangle (2.8,2.1)node[pos=0.5] {$m_1$};
\draw (2.8,1.4) rectangle (5,2.1)node[pos=0.5] {...};
\draw (5,1.4) rectangle (6.1,2.1)node[pos=0.5] {$m_{M'}$\footnotemark[1]};
\draw [->] (4,2.1)--(8.5,2.7);
\draw [->] (8.5,2.1)--(9.5,2.7);
\draw (6.5,1.4) rectangle (7.3, 2.1) node[pos=0.5] {$V_1$};
\draw (7.3,1.4) rectangle (9.5,2.1) node[pos=0.5] {...};
\draw (9.5,1.4) rectangle (10.5,2.1) node[pos=0.5]{$V_{M''}$};
\draw (0,2.8) rectangle (4.5,3.5) node[pos=0.5] {$\epsilon_1,\dots,\epsilon_K \sim N(0,I_{M'})$};
\draw [->](4.5,3.15)-- (5,3.15);
\draw (5.1,2.8) rectangle (12.4,3.5) node[pos=0.5]{$y^1=V\epsilon_1+m, \dots, y^K=V\epsilon_K+m$ \footnotemark[2] };
\draw[->] (8.2,3.5)--(3.3,4.1);
\draw[->] (8.4,3.5)--(6.1,4.2);
\draw[->] (8.6,3.5)--(9.8,4.1);
\draw (0,4.2) rectangle (1.3,4.9) node[pos=0.5]{$x_1,t_1$};
\draw (1.3,4.2) rectangle (2.1,4.9) node[pos=0.5]{...};
\draw (2.1,4.2) rectangle (3.4,4.9) node[pos=0.5]{$y_1^1,s_1$};
\draw (3.4,4.2) rectangle (4.2,4.9) node[pos=0.5]{...};
\draw (4.2,4.2) rectangle (5.7,4.9) node[pos=0.5]{$x_{N},t_{N}$};
\draw [->] (2.8,4.9)--(2.8,5.5);
\draw [->] (6.1,4.8)--(6.1,5.5);
\draw [->] (9.8,4.9)--(9.8,5.5);
\draw (5.9, 4.55) circle[radius=0.01];
\draw (6.3, 4.55) circle[radius=0.01];
\draw (6.1, 4.55) circle[radius=0.01];
\draw (6.5,4.2) rectangle (7.8,4.9) node[pos=0.5]{$x_1,t_1$};
\draw (7.8,4.2) rectangle (8.6,4.9) node[pos=0.5]{...};
\draw (8.6,4.2) rectangle (9.9,4.9) node[pos=0.5]{$y_1^K,s_1$};
\draw (9.9,4.2) rectangle (10.7,4.9) node[pos=0.5]{...};
\draw (10.7,4.2) rectangle (12.4,4.9) node[pos=0.5]{$x_{N},t_{N}$};
\draw (12.5,4.1) -- (12.8,4.1);
\draw (12.8,9.3) -- (12.8,4.1)node[pos=0.5,sloped,above] {expected signature};
\draw (12.8,9.3) -- (12.5,9.3);
\draw (0,5.6) rectangle (5.7,6.3) node[pos=0.5]{$S^1_1,S^1_2,$\hspace{4pt}$\cdots$\hspace{4pt}$,S^1_L$  };
\draw [->] (2.8,6.3)--(2.8,6.9);
\draw [->] (6.1,6.2)--(6.1,6.9);
\draw [->] (9.8,6.3)--(9.8,6.9);
\draw (6.1, 5.95) circle[radius=0.01];
\draw (6.3, 5.95) circle[radius=0.01];
\draw (5.9, 5.95) circle[radius=0.01];
\draw (6.5,5.6) rectangle (12.4,6.3) node[pos=0.5]{$S^K_1,S^K_2,$\hspace{4pt}$\cdots$\hspace{4pt}$,S^K_L$};
\draw (0,7) rectangle (5.7,7.7) node[pos=0.5]{$(\lambda^1)S^1_1,(\lambda^1)^2S^1_2,$\hspace{4pt}$\cdots$\hspace{4pt}$,(\lambda^1)^L S^1_L$};
\draw [->] (3,7.7)--(5.8,8.3);
\draw [->] (6.1,7.6)--(6.1,8.3);
\draw [->] (9.8,7.7)--(6.4,8.3);
\draw (6.1, 7.35) circle[radius=0.01];
\draw (6.3, 7.35) circle[radius=0.01];
\draw (5.9, 7.35) circle[radius=0.01];
\draw (6.5,7) rectangle (12.4,7.7) node[pos=0.5]{$(\lambda^K)S^K_1,(\lambda^K)^2S^K_2,$\hspace{3pt}$\cdots$\hspace{3pt}$,(\lambda^K)^L S^K_L$};
\draw (2.6,8.4) rectangle (10,9.2) node[pos=0.5]{$[\sum_{i=1}^K(\lambda^i)S^i_1]/K,$\hspace{4pt}$\cdots$\hspace{4pt}$,[\sum_{i=1}^K(\lambda^i)^L S^i_L]/K$};
\draw [->] (6.1,9.2)--(6.1,9.8);
\draw (4.6,9.9) rectangle (7.8,10.6) node[pos=0.5]{output};
\end{tikzpicture}
\footnotetext[1]{ If $(x_i)_{i=1}^N$ is a $d$-dimensional time series, then $M'=Md$ and $M''=\frac{dM(dM+1)}{2}$ }
\footnotetext[2]{Here,
\[m=(m_1,..,.m_{M'}),\text{\hspace{2pt}}V=
\begin{bmatrix}
V_{1}        &0             &\dots  &0\\
V_{2}        &V_3           &\dots  &0\\
\vdots       &\vdots        &\ddots &\vdots\\
V_{M''-M'+1} &V_{M''-M'+2 } &\dots  &V_{M''}\\
\end{bmatrix}
.\] }
\end{minipage}
\caption{Graphical representation}
\label{fig0}
\end{figure}

In other words, the first part of the model deploys a data augmentation scheme that makes use of the sampling procedure outlined above in order to obtain more information about the original time series.

This scheme is strongly connected to the GP regression model described above, since they both exploit a Gaussian process based interpolation procedure. We stress again that the main difference between them is how the mean and the covariance are tuned. In a classical GP regression model, the mean and covariance structure are specified a-priori. In our approach the mean and the covariance are fully learnt by the model.
This different approach is both a necessity and an improvement. Indeed, we cannot make use of hand-designed mean and covariance functions since we would need one for each time series. At the same time this different tuning procedure is a strength of our model because we are not introducing any constraint on the Gaussian law we are sampling from.

\subsubsection{Evaluation of the expected signature}

The second part of the model is responsible of the extraction of the relevant features from the K enriched time series, and is made of four layers. These layers estimate the expected signature based on the ensemble provided by the first part of the model. Clearly the estimate becomes more and more reliable as long as the size K of the sample increases. A quantitative version of this statement is given by \cref{KInequality}.

The first layer of this phase receives the K time series and applies a dimensional augmentation by adding the time component. The following layer computes the signature of each time series up to a truncation level $L$. Then, the normalization procedure that allows the expected signature to characterize the law of stochastic processes is applied to each signature. The expected signature is estimated by averaging component-wise. Finally, the expected signature estimate is used by a softmax layer in order to classify the starting time series.

At this stage we can appreciate that the normalization procedure, which is a requirement to ensure that the expected signature would characterize the law at the theoretical level, turns out to be a crucial step also from a computational point of view. Indeed, we will see that a loose normalization can make the training unstable, see \cref{NewHyper} for experimental evidence.

In addition, the normalization procedure can also be interpreted as a time series preprocessing technique. Indeed, $(\lambda S_1,\lambda^2S_2,\ldots,\lambda^L S_L)$ is both the normalized signature of a given time series $z=(z_{t_i})_i$ and the signature of the rescaled time series $\lambda z$. We point out that in the machine learning literature one can find several time series normalization methods. Here, they would not be equally effective, since they are not able to preserve the fine theoretical properties of the expected signature. See \cref{DiffNorm} for further details.

A technical novelty of our work is that the normalization constant $\lambda$ is found using only the truncated signature. In \cref{CorProved} we show that the value $\lambda$ we use is a proper approximation of the theoretical value $\lambda_T$. In particular, we prove that $\lambda$ converges to $\lambda_T$, as the truncation threshold of the signature diverges to infinity, and we find an estimate on the convergence rate.

\subsection{Training procedure and architecture modifications}

One of the main features of the model is that it can be trained by using any classical gradient based optimization scheme (e.g. SGD) since back-propagation can be performed.
Indeed, the sampling layer does not interfere with the gradient computation because we are exploiting a well-known Gaussian laws property (if $X\sim N(0,I)$, then $Y=VX+m\sim N(m,VV^T)$) in order to take samples of $N(m,VV^T)$ by just sampling from a standard Gaussian\footnote{In the machine learning community this trick is also known as 'the reparameterization trick' \cite{KW2013}.}.

The signature layer and normalization procedure are both differentiable thanks to formula \eqref{eq:SigPath} and the gradient computed in \cref{NormDerivative}.

The usage of back-propagation suggests that the proposed model can be easily introduced in more complex and deeper architectures. The easiest possible modification of our model architecture can be obtained by increasing the number of layers in the prediction phase. 

There are other possible changes that can be easily implemented. For example, we can introduce any different signature computation algorithm, such as the log-Signature transform \cite{CK2016}, or any time series transformation. Indeed, we have been using the time augmentation because it has a relevant role in various theoretical results (\cref{Prop4} and \cref{UnivApprox}), but it can be replaced by various transformation. An extended list of possible and useful time series transformation can be found in \cite{MFKL2020}.

Another possible modification can be obtained by introducing a limitation on the square root $V$ of the covariance function of the conditional law, in order to reduce the computational burden. For instance a reasonable modification is to set to zero some sub-diagonals, that is, if $V=(v_{i,j})_{i,j=1}^{M'}$, to set $v_{i,j}=0$ for all $(i,j)$ such that $i<j$ and $i<\alpha$. The parameter $\alpha$ can be interpreted as a control on the correlation time-scale. Indeed, with this choice, $X_{s_l}$ and $X_{s_m}$ are correlated if $|l-m|<\alpha$.

Lastly, we indicate two possible strategies to select the new time instants $(s_i)_{i=1}^M$. The first one considers the middle points of the sub-intervals $[t_i,t_{i+1}]$ for $i=1\dots N-1$ as new time instants. A second choice takes time instants smaller than $t_1$ or/and bigger than $t_N$ together with the middle points. Even if these two possibilities looks quite similar, they produce a substantial difference: all the time series generated using the first strategy have some components of their signature that are shared by all the other time series. For example, they all have the same components of the first level of the signature since these components depend only on the first and last value of each time series and they all have as first and last values the corresponding values of the original time series.
Instead, the second strategy makes all the components of the signature affected by the sampling procedure.

\section{Experimental results}\label{s:results}

In this section we perform some experiments on real and synthetic datasets in order to analyze the effect of the new hyperparameters  and to assess the inference capability of the model described in \cref{arch}.

\subsection{Implementation details}\label{s:ImpDetails}

All models have been trained using the SGD optimizer as implemented by \texttt{Pytorch} \cite{PGML2019}. Signature computations were done using the package \texttt{signatory} \cite{KL2020}. The hyperparameters have been tuned using a grid search strategy and cross-validation as validation procedure. Weighted accuracy has been chosen as validation metric due to the strong unbalance of some dataset. The code implementing the model and the generation of the synthetic datasets used is available on a dedicated \emph{GitHub} page \cite{githubpage}.

\subsubsection{Datasets}\label{s:datasets}

The dataset used are of two different types. We have created three synthetic datasets sampling trajectories of the following stochastic processes (see for instance \cite{RevYor1999} for details),
\begin{itemize}
  \item standard Brownian motion,
  \item fractional Brownian motion,
  \item geometric Brownian motion, namely the solution of
    \[
      dX_t
        = \mu x_t\,dt + \sigma X_t\,dB_t,
    \]
  \item Ornstein-Uhlenbeck process, namely the solution of
    \[
      dX_t
        = \alpha(\gamma - X_t)\,dt + \beta\,dB_t,
    \]
  \item the solution of a \emph{stochastic differential equation} with non-linear coefficients,
    \[
      dX_t
        = \bigl(\sqrt{1+X_t^2}+\frac12 X_t\bigr)\,dt
          + \sqrt{1+X_t^2}\,dB_t
    \]
    \item white noise perturbations of the following smooth function
    \[
      f(t)
        = 6sin^3(4\pi t)cos^2(4\pi t).
    \]
\end{itemize}

In particular, the first two syntethic problems, called \textbf{FBM} and \textbf{OU}, aim at discriminating two different fractional Brownian motions and two different Ornstein-Uhlenbeck processes, respectively. Instead, the third dataset, called \textbf{Bidim}, is composed by bidimensional time series obtained from all the stochastic processes listed above, where first and second component of any time series are trajectories of the same random process. 

The second group of datasets has been collected from the \emph{Time series classification} website \cite{tsc}. We have chosen datasets with not too many observations, in order to reduce the computational burden, while keeping reliable results. At the same time, we have tried to use datasets coming from different topics (Ecg, Sensor, Image, $\dots$). In particular, we have used the following datasets: ECG200 (electrical activity recorded during one heartbeat. Here the classes are normal heartbeat and Myocardial Infarction), Epilepsy (tri-axial accelerometer data of healthy participants performing one of four class activities), PowerCons (household electric power consumption in warm/cold season), FacesUCR (rotationally aligned facial outlines of 14 grad students), Ham (spectrographs of French or Spanish dry-cured hams). A full description of these datasets can be found at \emph{Time series classification} website \cite{tsc}.

\subsubsection{Benchmark models}\label{s:BenchMod}

We have used a series of benchmark models in order to compare deterministic augmentation schemes with the stochastic scheme proposed here.
The benchmark models used are the following ones:
\begin{itemize}
  \item \texttt{NoAug model}: no augmentation scheme,
  \item \texttt{FFT model}: fast Fourier transform,
  \item \texttt{CS model}: cubic spline interpolation,
  \item \texttt{GP model}: Gaussian Process regression model.
\end{itemize}
In particular \texttt{NoAug} model receives as input the signature of each time series and applies the normalization procedure discussed in \cref{ExpSig} and a linear layer with a softmax. Instead, \texttt{FFT model}, \texttt{CS model} and \texttt{GP model} apply the \texttt{NoAug model} after the preprocessing phase of the time series. Indeed, any time series is augmented using Fast Fourier transform, cubic spline interpolation or classical GP regression model, respectively.
In particular, in the \texttt{GP model} each time series is augmented using the posterior mean obtained by a GP regression model assuming that the mean is a constant function and that the covariance is a squared exponential function.

\subsection{Results}

In this section we firstly analyze how the new hyperparameters introduced by the proposed data augmentation scheme can affect model performance and stability. Then, we compare the performance of our model with well-known models in the literature and the benchmark models we have introduced in \cref{s:BenchMod}. The comparison with these last models will indicate that our stochastic augmentation module can strongly improve signature-based models.

We preliminary point out that since our model is intrinsically stochastic, we have estimated its performance by running it multiple times with the full test set, and by averaging the obtained accuracy and weighted accuracy. The variance of the output was estimated in the following way:
\begin{itemize}
  \item The trained model receives $50$ times each time series of the test set producing as output 50 vectors of length equal to the number of possible labels, where each vector is a probability distribution over the set $\{1,...,D\}$, and where $D$ is the number of possible labels.
  \item Every time series in the test set is associated to a $D \times D$ covariance matrix obtained from the $50$ corresponding vectors;
  \item The empirical density of the 2-norm of the covariance matrices is computed.
\end{itemize}

\subsubsection{Hyperparameters tuning and normalization}\label{NewHyper}

Our model raises the problem of analysing the effect of a new set of hyperparameters on the performance of the supervised task. As in all signature based models, the level $L$ of the signature truncation is a hyperparameter. Likewise, the data augmentation phase introduces the number $M$ of new time instants that interpolate the original time series. Our model requires two new hyperparameters: the sample size $K$ required for the statistical estimates on the expected signature, and the shape parameter $C$ for the tensor normalization of the signature, whose role is explained below.
In this section we focus in details on $K$ and $C$, and show that they should be properly chosen in order to achieve competitive results.

We first consider the shape parameter $C$. We recall that the introduction of a normalization procedure allows the expected signature to characterize the law of the corresponding stochastic process, see \cref{ExpSig} for further details. In particular, the normalization takes the signature $S=(1,s^1,s^2,\cdots)$ and produces the vector $\lambda S=(1,\lambda(S)s^1,\lambda(S)^2s^2,\cdots),$ with $\lambda(S)$ that is chosen as the only scalar such that $|\lambda S|$ is equal to $\psi(|S|)$. The function $\psi$ should satisfy various theoretical properties, as stated in \cref{NormalizationBuilt}. A possible $\psi$ function is given by
\[
  \psi(\sqrt{x})=\left\{
\begin{aligned}\
      & x  \hspace{3.3cm}\text{  		 if } x\le C \\
         & C+C^{2}(C^{-1}-x^{-1})\text{ otherwise } \\
         \end{aligned}
         \right..
\]
We wish to emphasize that the actual value of $C$, and thus the tensor normalization, does not play a significant role in the characterization of the law of a stochastic process by means of the normalized expected signature (\cref{MainExp}). In other words, any normalization would fit. On the other hand our results shown below prove that the shape parameter $C$ plays a relevant role from an experimental point of view. Indeed, it actually determines if the deployed normalization is too strict or too loose and in turns, if the model may underperform or show instabilities. Both these cases should be avoided when training the proposed model.
\begin{figure}
    \centering\includegraphics[width=0.5\linewidth]{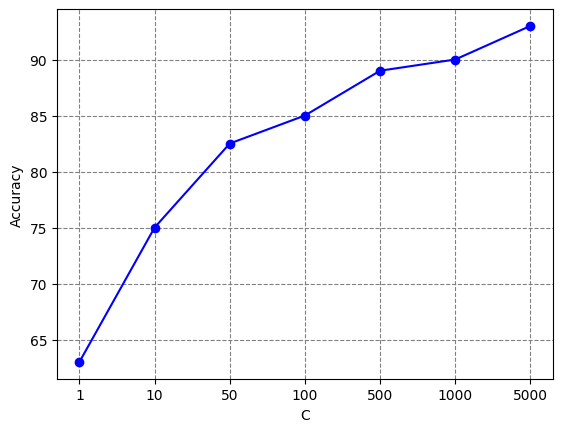}
    \caption{Results obtained working on Bidim dataset.}
    \label{fig1}
\end{figure}

\begin{figure}
    \subfloat[\centering Learning curve of the proposed model.]{{\includegraphics[width=0.5\linewidth]{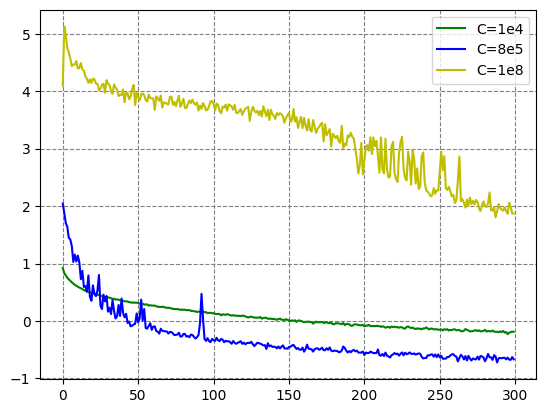} }}
    \subfloat[\centering  Learning curve of the NoAug model.]{{\includegraphics[width=0.5\linewidth]{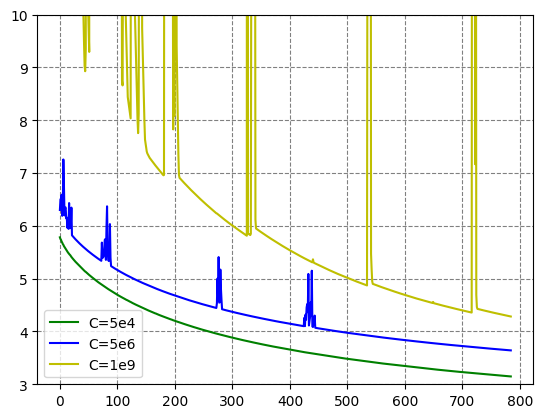} }}%
    \caption{Results obtained working on Bidim dataset.}%
    \label{fig2}%
\end{figure}
\cref{fig1} shows that when $C$ is close to 1, that is in the case the normalization is very rigid, the model can strongly underperform. In contrast, \cref{fig2} shows that a loose normalization can make the training process quite unstable. In particular, the appearance of instabilities even when working with the \texttt{NoAug} model defined in \cref{s:BenchMod}, the easiest model that can be built using the signature as feature extraction mechanism, suggests that the normalization procedure should be deployed and tuned whenever the signature transform is used.

We turn to the analysis of the number $K$ of generated augmented time series. \cref{KInequality} shows that the empirical mean of the $K$ signatures obtained by the $K$ enlarged time series is a good approximation of the expected signature as long as $K$ is sufficiently large.
\cref{fig3} empirically shows that by increasing $K$, that is by getting a better and better approximation of the expected signature, the output of the model becomes more and more stable with respect to the sampling procedure. Indeed, if the model is fed multiple times with the same input, then the variance in the output is small for $K$ big enough.

At last, we highlight that a large value of $K$ can slow down the training phase. Hence, we suggest to look for the smallest $K$ up to a reasonably low variance in the output.
\begin{figure}
    \subfloat[\centering Results obtained working on Bidim dataset.]{{\includegraphics[width=0.5\linewidth]{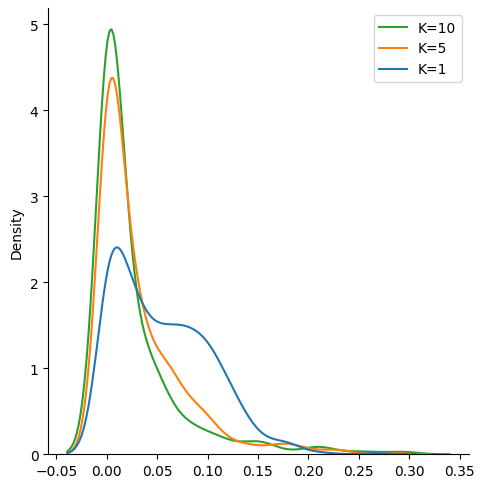} }}
    \subfloat[\centering  Results obtained working on ECG200 dataset.]{{\includegraphics[width=0.5\linewidth]{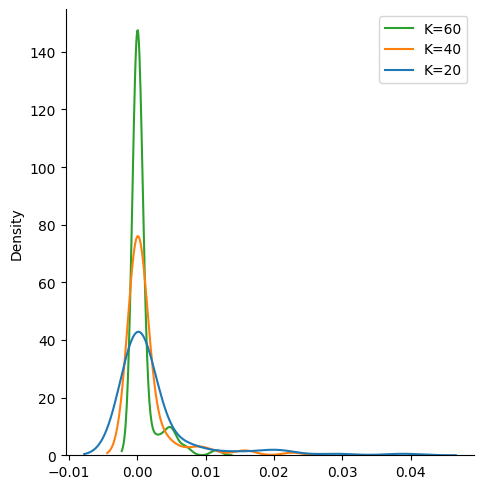} }}%
    \caption{Variance in the output.}%
    \label{fig3}%
\end{figure}

\subsection{Model performance}

In this section we compare the results of our model with some other models on the dataset described in \cref{s:datasets}. In particular, we compare the proposed model with two well-known models, 1NN with DTW \cite{keogh2005exact} and Hive-Cote 2 \cite{middlehurst2021hive}, which are considered state of the art model for time series classification problems, see for instance \cite{MSB2023}, and with the  benchmark models introduced in \cref{s:BenchMod}, in order to show the effectiveness of the stochastic augmentation.

\begin{figure}
    \subfloat[\centering Performance on synthetic data. ]{{\includegraphics[width=0.5\linewidth]{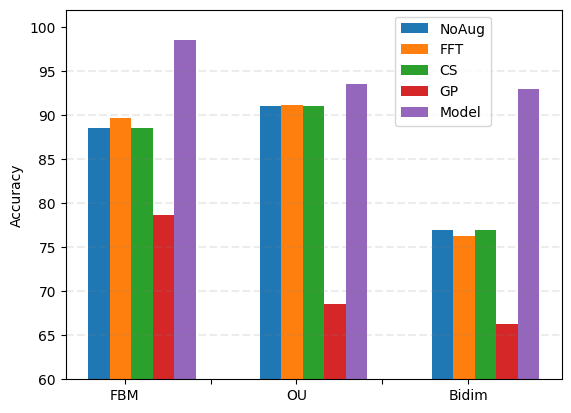} }}
    \subfloat[\centering Performance on real data.]{{\includegraphics[width=0.5\linewidth]{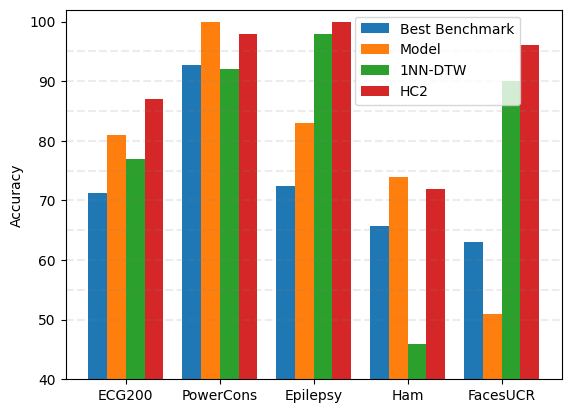} }}%
    \caption{}%
    \label{figRes}%
\end{figure}

\begin{table}[]
\begin{tabular}{|c|c|c|c|c|c|c|c|}
\hline
\textit{Dataset\textbackslash{}Model} & \textit{NoAug} & \textit{FFT} & \textit{CS} & \textit{GP}   & \textit{Model}       & \textit{HC2}  & \textit{1NN-DTW} \\ \hline
\textit{ECG200}                       & 67.5           & 67           & 67          & 70   & 81 (1e-4)            & \textbf{87}   & 77      \\ \hline
\textit{PowerCons}                    & 90             & 90.5         & 90          & 92.7 & \textbf{99.9 (1e-6)} & 98.3          & 92.2    \\ \hline
\textit{Epilepsy}                     & 68.4           & 70.4         & 68          & 72.5 & 83 (4e-4)            & \textbf{100}  & 97.8    \\ \hline
\textit{Ham}                          & 62.6           & 58           & 60          & 65.7 & \textbf{74.2 (2e-4)} & 72.3          & 46.6    \\ \hline
FacesUCR                              & 63             & 63           & 63          & 53   & 51 (1e-5)            & \textbf{96.4} & 90.4    \\ \hline
\end{tabular}
\caption{Accuracy results on real datasets. Accuracy and variance averaged over the test set are reported for our model.}
\end{table}

The comparison with the benchmark models indicates that the stochastic augmentation can strongly increase the inference capability with respect to the model that uses the signature instead of the expected signature, i.e. the \texttt{NoAug} model.

In addition, these results suggest that classical deterministic interpolation schemes are not as effective as the Gaussian processes based augmentation introduced here.
For the sake of completeness  we have shown the results of HC2 and 1NN-DTW. Clearly, we cannot state that the proposed model is statistically comparable to HC2, but the results indicate that the introduced ideas can strongly increase the performance of signature based models.

\subsection*{Acknowledgements}

{M.~R.} acknowledges the partial support of the project PNRR - M4C2 - Investimento 1.3, Partenariato Esteso PE00000013 - \emph{FAIR - Future Artificial Intelligence Research} - Spoke 1 \emph{Human-centered AI}, funded by the European Commission under the NextGeneration EU programme, of the project \emph{Noise in fluid dynamics and related models} funded by the MUR Progetti di Ricerca di Rilevante Interesse Nazionale (PRIN) Bando 2022 - grant 20222YRYSP, and of the project \emph{APRISE - Analysis and Probability in Science} funded by the the University of Pisa, grant PRA\_2022\_85. 

{F.~T.} gratefully acknowledges the Center for High Performance Computing (CHPC) at Scuola Normale Superiore
(SNS) for providing the computational resources used in this work.

\appendix
\section{Theoretical background and some results}\label{s:appendix}

The first of this appendix contains the definition and property of the signature. The second part introduces the expected signature.

\subsection{Signature} \label{Sign}
Almost all the results contained in this section can be found in \cite{LCL2007} and in \cite{FriHai2014}.
Firstly, we need to introduce the tensor space and the truncated tensor space.

\begin{defn}
The tensor product space of $\Rset^d$ is 
\begin{equation*}
    T(\Rset^d)=\{(a^n)_{n \in \Nset}|a^n\in (\Rset^d)^{\otimes n}\},
\end{equation*} the set of the formal series of tensors of $\Rset^d$.
The truncated tensor product space at degree $N$ is the set 
\begin{equation*}
    T(\Rset^d)_L=\{(a^n)_{n \le L}|a^n\in (\Rset^d)^{\otimes n}\}.
\end{equation*}
They are algebras w.r.t the component-wise addition, the component-wise multiplication by scalars and the tensor product 
\begin{equation*}
    a\otimes b=\Bigl(\sum_{i=0}^na^{n-i}\otimes b^{i}\Bigr)_n.
\end{equation*}
\end{defn}

Any tensor component $a^n$ can be represented by its components with respect to the canonical basis of $(\Rset^d)^{\otimes n}$. In other words $a^n$ can be identified by a set of scalar values $(a^{i_1,\dots,i_n})_{i_1,\dots,i_n=1}^d$.
  
Moreover, the set
\[
  T_1(\Rset^d)
    =\Biggl\{a \in T(\Rset^d)| a_0=1, \sqrt{ \sum_{n=0}^{\infty}\sum_{i_1,\dots,i_n=1}^{d}|a^{i_1,\dots,i_n}|^2}< \infty\Biggr\}
\]
is actually a Banach space.

\begin{defn}
Consider $X:[0,T]\to \Rset^d$ a continuous function with bounded variation. The signature of the path $X$ is the sequence of iterated integrals $S(X)_{0,T}=(1,\dots,S(X)^n,\dots )$, where
\begin{equation*}
    S(X)^n=\biggl(\int_0^T\dots\int_0^{u_3}\int_{0}^{u_2}dX_{u_1}^{i_1} dX_{u_2}^{i_2}\dots dX_{u_n}^{i_n}\biggr)_{i_1,\dots,i_n=1\dots d}.
\end{equation*}

\end{defn}

The signature has various properties that makes it well-suited for dealing with path-like data.

\begin{prop}\label{Prop4}
Let $X,Y:[0,T]\rightarrow \Rset^d$ be BV and continuous functions and $\phi:[0,T]\to [0,T]$ be a $C^1$, increasing and surjective function. Then:
\begin{enumerate}
\item $S(X)_{s,t}=S(X)_{s,u}\otimes S(X)_{u,t}$, for all $s<u<t$; 
\item $S(X)=S(X \circ \phi)$;
\item $|S(X)^n|\le \frac1{n!}|X|_{1,[0,T]}^n$, for all $n\in\Nset$, where $|X|_{1,[0,T]}$ is the total variation of $X$; 
\item $S(\overline{X})=S(\overline{Y})$ if and only if $X=Y$, where $\overline{X}_t=(X_t,t)$ (see \cite{HL2010}).
\end{enumerate}
\end{prop}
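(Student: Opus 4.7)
The plan is to handle the four claims in the order given, since (1)--(3) are essentially computations with the iterated-integral definition, while (4) is the deep result and will be reduced to the cited theorem of Hambly--Lyons.

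For (1), I would prove Chen's identity componentwise. Fix the multi-index $(i_1,\dots,i_n)$ and the intermediate time $u\in(s,t)$. The simplex $\{s<u_1<\dots<u_n<t\}$ decomposes as the disjoint union, over $k=0,\dots,n$, of the products $\{s<u_1<\dots<u_k<u\}\times\{u<u_{k+1}<\dots<u_n<t\}$. Splitting the iterated integral along this decomposition and reading the result in terms of the canonical basis of $(\Rset^d)^{\otimes n}$ yields $S(X)^n_{s,t}=\sum_{k=0}^n S(X)^{n-k}_{s,u}\otimes S(X)^k_{u,t}$, which is exactly the $n$-th component of $S(X)_{s,u}\otimes S(X)_{u,t}$.

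For (2), I would use the $C^1$ change of variables $v_i=\phi(u_i)$ inside each iterated integral. Since $\phi$ is increasing and surjective the simplex $\{0<u_1<\dots<u_n<T\}$ is mapped bijectively onto $\{0<v_1<\dots<v_n<T\}$, and $d(X\circ\phi)^{i_k}_{u_k}=dX^{i_k}_{\phi(u_k)}=dX^{i_k}_{v_k}$ in the Stieltjes sense, so the integrals coincide level by level. For (3), I would bound each differential by the total variation measure $d|X|$, so that $|S(X)^n_{0,T}|$ is majorised by the integral of $d|X|^{\otimes n}$ over the simplex of volume $|X|^n_{1,[0,T]}/n!$; the factor $n!$ comes precisely from the ordering constraint.

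The main obstacle is (4), the uniqueness statement with time augmentation. Since $\bar X_t=(X_t,t)$ has a strictly monotone last coordinate, $\bar X$ is tree-reduced in the sense of Hambly--Lyons, and the theorem of \cite{HL2010} gives that the signature map is injective on tree-reduced paths up to reparameterization; combined with (2) this yields $\bar X=\bar Y$, hence $X=Y$. I would simply invoke this result rather than reproving it, since a self-contained argument would require developing the algebraic machinery of tree-like equivalence and the characterization of the kernel of $S$, which is well beyond the scope of this appendix. The converse direction, $X=Y\Rightarrow S(\bar X)=S(\bar Y)$, is immediate from the definition.
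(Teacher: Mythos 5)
Your proposal is correct and is essentially the only reasonable way to fill in the argument: the paper itself gives no proof of \cref{Prop4}, deferring to \cite{LCL2007,FriHai2014} and, for item (4), to \cite{HL2010}, and your computations for (1)--(3) (the simplex decomposition for Chen's identity, the change of variables for reparameterization invariance, and the $1/n!$ volume of the ordered simplex under the total-variation product measure) are exactly the standard arguments from those references, while your reduction of (4) to the Hambly--Lyons uniqueness theorem via the strictly monotone time coordinate matches the paper's own citation. The one point worth flagging is that the signature depends on a path only through its increments, so $S(\overline{X})=S(\overline{Y})$ can at best force $X-X_0=Y-Y_0$; item (4), and hence your conclusion $\overline{X}=\overline{Y}$, implicitly assumes the two paths share a common starting point.
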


The third property in \cref{Prop4} shows that by truncating the signature we do not lose a huge amount of information. The fourth property in \cref{Prop4} on the other hand shows that, up to adding the time component, the signature uniquely identifies the corresponding path.

Moreover, it holds a universal approximation theorem.

\begin{thm}{(I.P. Arribas, \cite{Arribas2018})}\label{UnivApprox}
  Let $F:K\to \Rset$ be a continuous function defined over a compact set $K$, composed by continuous and BV functions from $[0,T]$ to $\Rset^d$. Then, for any $\epsilon>0$, there exists a linear map $L$ such that for all $X \in K$,
  \[
    |F(X)-L(S(\overline{X})|\le \epsilon.
  \]
\end{thm}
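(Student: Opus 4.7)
The plan is to apply the Stone--Weierstrass theorem to the family
\[
  \mathcal{A} = \bigl\{X \mapsto L(S(\overline{X})) : L \text{ is a linear functional factoring through some truncation } T(\Rset^d)_N\bigr\},
\]
viewed as a subset of $C(K,\Rset)$. Since $K$ is compact, it suffices to check that $\mathcal{A}$ is a subalgebra that contains the constants and separates points.

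First I would verify the algebra properties. Each element of $\mathcal{A}$ is continuous on $K$ because $X \mapsto S(\overline{X})$ is continuous in the natural variation topology and each such $L$ is continuous. Constants lie in $\mathcal{A}$: the linear functional that reads off the $0$-th tensor component is identically $1$ on any signature. Closure under sums and scalar multiples is immediate from the linearity of $L$. The nontrivial property is closure under products, and this is where the shuffle identity enters: the product of two coordinate iterated integrals of $\overline{X}$ equals a sum of coordinate iterated integrals of $\overline{X}$ indexed by the shuffle permutations of the two index words. Extending by linearity, the pointwise product of any two elements of $\mathcal{A}$ is again in $\mathcal{A}$.

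Next, $\mathcal{A}$ separates points. Given $X \neq Y$ in $K$, property (4) of \cref{Prop4} gives $S(\overline{X}) \neq S(\overline{Y})$ as elements of $T_1(\Rset^d)$, so the two signatures differ in some coordinate $(i_1,\dots,i_n)$; the associated coordinate functional belongs to $\mathcal{A}$ and separates $X$ from $Y$. Stone--Weierstrass then yields, for any $\epsilon>0$, an element of $\mathcal{A}$ uniformly $\epsilon$-close to $F$ on $K$, which is exactly the required linear $L$.

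The main obstacle is the shuffle identity at the heart of closure under products: one must carefully expand the product of two iterated integrals by decomposing the product simplex $\Delta^n \times \Delta^m$ into the disjoint union, up to null sets, of the simplices indexed by the $(n,m)$-shuffles, and reindex the resulting integrals as iterated integrals of $\overline{X}$ of order $n+m$. A secondary technical point is fixing an appropriate topology on $K$ (for instance, the $1$-variation topology) so that $X \mapsto S(\overline{X})$ is continuous into $T_1(\Rset^d)$ and $K$ remains compact, ensuring that the functional setup of Stone--Weierstrass applies.
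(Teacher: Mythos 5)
The paper states this theorem as a cited result of Arribas and gives no proof of its own, so there is nothing internal to compare against; your argument is the standard one from the literature (Stone--Weierstrass on the algebra of linear functionals of the signature, with the shuffle identity providing closure under products, the constant $0$-th component providing the constants, and the Hambly--Lyons uniqueness of \cref{Prop4}(4) providing separation of points), and it is correct. You also rightly flag the one point that genuinely needs care, namely fixing a topology (e.g.\ $1$-variation) in which $K$ is compact and $X\mapsto S(\overline{X})$ is continuous so that Stone--Weierstrass applies in $C(K,\Rset)$.
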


At last, we indicate how to compute the signature of a time series.

\begin{defn}
    Let $x=(x_{t_i})_{i=0}^N$ be a time series. Its signature is given by the signature of a linear interpolation of $x$.
\end{defn}

A-priori the definition depends on the choice of the linear interpolation (that is the speed at which one traverses the gap between the $x_i$), but \cref{Prop4} ensures that the definition of signature for a stream of data is well-defined and independent from the choice of the linear interpolation. Moreover, \cref{Prop4} allows to easily compute the signature of a time series. Indeed,
\begin{equation}\label{eq:SigPath}
  S(x)
    = \exp(x_1-x_0)\otimes\dots\otimes \exp(x_N-x_{N-1}),
\end{equation}
where we recall that $\exp(a)=\sum_{n=0}^{\infty}\frac{a^{\otimes n}}{n!}$.

\subsection{Expected Signature} \label{ExpSig}
All the results without proof can be found in \cite{CheObe2022}.

\begin{defn}
Consider a stochastic process $(X_t)_{t\in [0,T]}$ such that almost every trajectory is continuous with bounded variation.
The sequence $(\E[S(X)^{i_1,\dots,i_n}])_{i_1,\dots,i_n=1,\dots,d}$ is called the \emph{expected signature} of $X$.
\end{defn}

The expected signature is able to identify the law of the corresponding stochastic process only if it is properly normalized.

\begin{defn}
    A continuous and injective map $\Lambda:T_1(\Rset^d) \rightarrow T_1(\Rset^d)$ is called a \emph{tensor normalization} if there is $\lambda:T_1(\Rset^d) \rightarrow (0, \infty)$ such that:
    \begin{itemize}
       \item $\Lambda(t)=\delta_{\lambda(t)}(t):=(1,\lambda(t)t^1,\lambda(t)^2t^2,\dots)$
         for all $t\in T_1(\Rset^d)$,
       \item $|\Lambda(t)|\le R$ for all $t\in T_1(\Rset^d)$.
\end{itemize}
\end{defn}
Let us show how such a tensor normalization can be built.

\begin{prop}\label{NormalizationBuilt}
    Let $\psi:[1,\infty)\rightarrow [1,\infty)$ be a bounded, injective and $K$-Lipschitz function such that $\psi(1)=1$ and $\sup_{x\geq 1}\frac{\psi(x)}{x^2}\le 1$.
    Given $t \in T_1(\Rset^d)$,  consider as $\lambda(t)$ the only non-negative value such that $|\delta_{\lambda(t)}(t)|^2=\psi(|t|)$.\vspace{5pt}
    Then, the map $\Lambda(t)=\delta_{\lambda(t)}(t)$ is a tensor normalization and there exists a constant $c>0$ such that for all $s,t \in T_1(\Rset^d)$,
    \begin{equation}\label{eq:1}
        |\Lambda(s)-\Lambda(t)|\le c \min(\sqrt{|t-s|},|t-s|).
    \end{equation}
\end{prop}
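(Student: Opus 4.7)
The plan is first to define $\lambda(t)$ via a monotonicity argument, then verify boundedness and injectivity, and finally derive the quantitative estimate by a splitting argument. For fixed $t \in T_1(\Rset^d)$, consider $f_t(\lambda) := |\delta_\lambda(t)|^2 = 1 + \sum_{n\ge 1}\lambda^{2n}|t^n|^2$, which is continuous and non-decreasing on $[0,\infty)$ (strictly so for non-trivial $t$), satisfies $f_t(0) = 1$, $f_t(1) = |t|^2$, and diverges as $\lambda\to\infty$. Since $\psi(1)=1$ and $\psi(x)\le x^2$ give $\psi(|t|)\in[1,|t|^2]$, the equation $f_t(\lambda) = \psi(|t|)$ has a unique solution $\lambda(t)\in(0,1]$; the crucial bound $\lambda(t)\le 1$, coming from the hypothesis $\sup\psi(x)/x^2\le 1$, makes $\delta_{\lambda(t)}$ a contraction on $\{a:a_0=0\}$. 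Boundedness is then immediate: $|\Lambda(t)|^2 = \psi(|t|) \le \sup\psi =: R^2$. For injectivity, $\Lambda(s)=\Lambda(t)$ gives $\psi(|s|)=\psi(|t|)$, whence $|s|=|t|$ by injectivity of $\psi$; equating tensor levels then shows $t=\delta_{\lambda(s)/\lambda(t)}(s)$, and strict monotonicity of $f_s$ combined with $|s|=|t|$ forces $\lambda(s)=\lambda(t)$ and $s=t$.

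For the main estimate~\eqref{eq:1}, I split
\[
  \Lambda(s)-\Lambda(t) = \delta_{\lambda(s)}(s-t) + \bigl(\delta_{\lambda(s)}(t)-\delta_{\lambda(t)}(t)\bigr).
\]
Since $(s-t)^0 = 0$ and $\lambda(s)\le 1$, the first summand has norm at most $|s-t|$. For the second, the elementary inequality $(a^n - b^n)^2 \le |a^{2n} - b^{2n}|$ for $a,b\ge 0$ (which follows from $|a^n-b^n|\le a^n+b^n$) yields
\[
  |\delta_{\lambda(s)}(t)-\delta_{\lambda(t)}(t)|^2 = \sum_n(\lambda(s)^n-\lambda(t)^n)^2|t^n|^2 \le \bigl|f_t(\lambda(s))-\psi(|t|)\bigr|.
\]
Adding and subtracting $\psi(|s|) = f_s(\lambda(s))$ and invoking the Lipschitz property of $\psi$ gives $|f_t(\lambda(s))-\psi(|t|)|\le|f_t(\lambda(s))-f_s(\lambda(s))| + K|s-t|$. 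The remaining term I handle via the Hilbert-space identity $|\delta_{\lambda(s)}(t)|^2-|\delta_{\lambda(s)}(s)|^2 = \langle\delta_{\lambda(s)}(t-s),\delta_{\lambda(s)}(t+s)\rangle$, Cauchy--Schwarz, and the bound $|\delta_{\lambda(s)}(t+s)|\le 2R + |s-t|$ (from the triangle inequality and $\lambda(s)\le 1$), obtaining $|f_t(\lambda(s))-f_s(\lambda(s))|\le(2R+|s-t|)\,|s-t|$. Combining these bounds yields $|\Lambda(s)-\Lambda(t)|^2\le C|s-t|$ for $|s-t|\le 1$; taking square roots gives the H\"older-$1/2$ branch of~\eqref{eq:1}, while the regime $|s-t|\ge 1$ is covered by the global bound $|\Lambda(s)-\Lambda(t)|\le 2R$.

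The main obstacle is deriving bounds that are independent of the potentially unbounded norms $|s|,|t|$. This is achieved by the repeated use of the contractive property of $\delta_\lambda$ for $\lambda\le 1$: once to bound $|\delta_{\lambda(s)}(s-t)|$ by $|s-t|$, and once (via the triangle inequality) to keep $|\delta_{\lambda(s)}(t)|$ close to the uniformly bounded $|\Lambda(s)|\le R$. The pivotal algebraic step is the inequality $(a^n-b^n)^2\le|a^{2n}-b^{2n}|$, which telescopes the level-wise differences into a single comparison of $f_t$-values and thereby avoids the divergent sum $\sum n^2|t^n|^2$ that the naive bound $|\lambda(s)^n-\lambda(t)^n|\le n|\lambda(s)-\lambda(t)|$ would produce.
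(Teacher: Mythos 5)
The paper itself offers no proof of this proposition --- it is imported from \cite{CheObe2022} --- so your argument has to stand on its own, and most of it does. The definition of $\lambda(t)$ via the strict monotonicity of $f_t$ on $[0,1]$, the crucial consequence $\lambda(t)\le 1$ of $\sup\psi(x)/x^2\le1$, boundedness ($|\Lambda(t)|^2=\psi(|t|)\le\sup\psi=:R^2$), injectivity, and the chain of estimates based on the splitting $\Lambda(s)-\Lambda(t)=\delta_{\lambda(s)}(s-t)+(\delta_{\lambda(s)}-\delta_{\lambda(t)})(t)$, the inequality $(a^n-b^n)^2\le|a^{2n}-b^{2n}|$ and the polarization identity are all correct. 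What they yield is
\[
  |\Lambda(s)-\Lambda(t)|
    \le |s-t|+\sqrt{(2R+K+|s-t|)\,|s-t|}
    \le c\max\bigl(\sqrt{|t-s|},|t-s|\bigr),
\]
that is, the estimate with $\max$ in place of $\min$. Your final step swaps the two branches: for $|s-t|\le 1$ one has $\min(\sqrt{|t-s|},|t-s|)=|t-s|$, so \eqref{eq:1} as written demands a \emph{Lipschitz} bound with a uniform constant in that regime, and the H\"older-$1/2$ bound $|\Lambda(s)-\Lambda(t)|\le c\sqrt{|s-t|}$ that you actually prove does not imply it. (The global bound $2R$ does cover $|s-t|\ge1$ for either version.)

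This is not something a sharper run of the same estimates can repair, because the uniform local Lipschitz bound is false. Take $d=1$, let $t=(1,A,0,\dots,0,B,0,\dots)$ with the entry $B$ at a high level $N$, and let $s$ be obtained by replacing $A$ with $A+\epsilon$, so that $|s-t|=\epsilon$. Writing $\lambda=\lambda(t)$ and $\mu=\lambda(s)$, choose $B$ so large that $\psi(|t|)\approx\sup\psi$ and $\lambda^{2N}B^2=\epsilon/A$, and take $N$ of order $10A/\epsilon$. Comparing the two defining equations gives $\mu/\lambda\approx 1-\epsilon/A$, hence $(\mu/\lambda)^{N}$ stays bounded away from $1$, and the level-$N$ contribution to $|\Lambda(s)-\Lambda(t)|^2$ is of order $\bigl(1-(\mu/\lambda)^N\bigr)^2\lambda^{2N}B^2\approx\epsilon/A$; thus $|\Lambda(s)-\Lambda(t)|$ is of order $\sqrt{\epsilon/A}$, which is $\gg\epsilon=|s-t|$ as $\epsilon\to0$. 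So the H\"older-$1/2$ rate for small $|s-t|$ is sharp, the inequality you prove (with $\max$) is the correct one --- and it is also the version actually needed downstream, for the continuity of $\Lambda$ and in \cref{CorProved} --- so the $\min$ in \eqref{eq:1} should be read as a $\max$. As a proof of the statement \emph{as written}, however, the last step is a genuine gap, and you should flag the discrepancy rather than absorb it.
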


\begin{esempio}\label{psi}
  The function
  \[
    \psi(\sqrt{x})=
      \begin{cases}
        x & \text{if} x\le C,\\
        C + \frac{C^{1+a}}{a}(C^{-a}-x^{-a}) &\text{otherwise},
      \end{cases}
  \]
  $a>0$ and $C\geq 1$ meets the assumptions of the previous proposition.
\end{esempio}

\begin{corollario} \label{CorProved}
  Let $\psi$ be a function as in the previous proposition, $M \in \Nset^*$, $t \in T_1(\Rset^d)$ and $t_L=(1,t^1,\dots,t^L,0,\dots)$. Then,
  \[
    \lambda_L
      :=\lambda(t_L)
      \rightarrow \lambda(t),
  \]
  as $L\to\infty$.
  
  Moreover, suppose that $t=S(X)$ for some continuous function with bounded variation and consider $r=\min\{j\in\Nset: t^j\neq 0\}$, then for all $L\geq r$,
  \begin{equation}\label{eq:2}
   |\lambda_L-\lambda|
     \leq C\min\Biggl(
       \sqrt[4]{\sum_{j=L+1}^{\infty}\frac1{j!}|X|_{1,[0,T]}^j},
       \sqrt{\sum_{j=L+1}^{\infty}\frac1{j!}|X|_{1,[0,T]}^j}\Biggr)^{\frac{1}{r}}. 
\end{equation}
\end{corollario}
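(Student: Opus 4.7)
The proof hinges on two ingredients already on the table: the Hölder-type estimate \eqref{eq:1} for the normalization $\Lambda$ and the factorial decay $|S(X)^j|\le |X|_{1,[0,T]}^j/j!$ from \cref{Prop4}(3). The plan is first to prove that $\Lambda(t_L)\to\Lambda(t)$ in $T_1(\Rset^d)$, then to read off the convergence of the scalars by projecting onto a single nonzero level of $\Lambda$, and finally to make the whole chain quantitative.

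For the qualitative statement, since $t\in T_1(\Rset^d)$ the tail $|t-t_L|^2=\sum_{n>L}|t^n|^2$ vanishes as $L\to\infty$, so \eqref{eq:1} yields $\Lambda(t_L)\to\Lambda(t)$. Fix the smallest $r$ with $t^r\neq 0$: for every $L\ge r$ the truncation agrees with $t$ at level $r$, hence the $r$-th levels of $\Lambda(t_L)$ and $\Lambda(t)$ equal $\lambda_L^r t^r$ and $\lambda^r t^r$ respectively. Passing to the limit in that component and dividing by $|t^r|\neq 0$ gives $\lambda_L^r\to\lambda^r$, whence $\lambda_L\to\lambda$ by continuity of the positive $r$-th root (note that $\lambda,\lambda_L\in(0,1]$ thanks to the assumption $\psi(x)\le x^2$ in \cref{NormalizationBuilt}).

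For the quantitative statement, combining $|t^j|\le|X|_{1,[0,T]}^j/j!$ with the elementary inequality $\sum a_j^2\le(\sum a_j)^2$ for non-negative sequences yields
\[
|t-t_L|^2 = \sum_{j>L}|t^j|^2 \le \Bigl(\sum_{j>L}|t^j|\Bigr)^2 \le \sigma_L^2,\qquad \sigma_L:=\sum_{j>L}\frac{|X|_{1,[0,T]}^j}{j!}.
\]
Feeding this into \eqref{eq:1} and projecting onto level $r$ gives $|\lambda_L^r-\lambda^r|\le(c/|t^r|)\min(\sqrt{\sigma_L},\sigma_L)$. The final step is the root extraction: for $a,b\ge 0$ and $r\ge 1$ one has $|a-b|^r\le|a^r-b^r|$, since writing $a=b+c$ with $c=|a-b|$ reduces this to $(b+c)^r\ge b^r+c^r$ (immediate from monotonicity of $x\mapsto(1+x)^r-x^r$). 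Taking $r$-th roots produces $|\lambda_L-\lambda|\le C\min(\sqrt{\sigma_L},\sigma_L)^{1/r}$. To recover the form of \eqref{eq:2}, I would use the trivial bound $|\lambda_L-\lambda|\le 1$: in the regime $\sigma_L\ge 1$ the stated right-hand side $C\sigma_L^{1/(4r)}$ is at least $C$, which can be chosen $\ge 1$, so the claim is automatic; for $\sigma_L\le 1$ the derived bound $C\sigma_L^{1/r}$ is stronger than $C\sigma_L^{1/(2r)}$. The main technical subtlety is precisely this last root-extraction, where the loss of a factor $1/r$ in the exponent is unavoidable without a uniform positive lower bound on $\max(\lambda_L,\lambda)$, and accounts for the $1/r$ exponent on the right-hand side of \eqref{eq:2}.
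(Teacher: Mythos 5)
Your proof is correct and takes essentially the same approach as the paper's: both control $|\lambda_L^r-\lambda^r|$ through the $r$-th tensor level of $\Lambda(t_L)-\Lambda(t)$, obtain the convergence from the continuity of $\Lambda$, and derive \eqref{eq:2} from the estimate \eqref{eq:1} combined with the factorial decay of \cref{Prop4}. The only difference is that you spell out what the paper leaves implicit — the root-extraction inequality $|a-b|^r\le|a^r-b^r|$ and the bookkeeping needed to match the exponents inside the $\min$ (sharper bound for small tails, trivial bound $|\lambda_L-\lambda|\le 1$ for large ones) — which is a useful elaboration rather than a divergence.
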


\begin{proof}
  If $r=0$, then $t=(1,0,\dots,0,\dots)=t_L$ and the result is trivial.
  Suppose that $r\neq 0$ and consider $L\geq r$, then
  \begin{equation*}
    |\lambda_L^r-\lambda^r|^2=\frac{|\lambda_L^rt^r-\lambda^rt^r|^2}{|t^r|^2}\le \frac{[\sum_{j=r}^L|\lambda_L^jt^j-\lambda^jt^j|^2+\sum_{j=L+1}^{\infty}|\lambda^jt^j|^2]}{|t^r|^2}=\frac{|\Lambda(t_L)-\Lambda(t)|^2}{|t^r|^2}.
  \end{equation*}
  Therefore, the convergence follows from the continuity of $\Lambda$. The inequality \cref{eq:2} follows from \cref{Prop4} and the inequality \cref{eq:1}.
\end{proof}

Since the normalization procedure is introduced in the proposed model, we need to be able to compute the gradient of $\lambda(t)$.

\begin{corollario}\label{NormDerivative}
    Let $\psi$ be a $C^1$ function that satisfies the assumptions of \cref{NormalizationBuilt}, and consider the corresponding $\lambda(t)$ function.
    
    Given $\overline{t} \in T_1(\Rset^d)_L$ such that $\overline{t}\neq(1,0,\dots,0)$, then there exists an open neighbourhood $U$ of $\overline{t}$ in $ T_1(\Rset^d)_L$ such that $\lambda_{|_U}$ is a $C^1$ function and
    \begin{equation}
      \nabla\lambda(\overline{t})
        =\Biggl(\frac
        {\overline{t}_j^{i_1,\dots,i_j}(\lambda(\overline{t})^{2j}
          - \frac1{2|\overline{t}|}\frac{d}{dx}\psi(|\overline{t}|))}
        {\sum_{k=1}^L k\lambda(\overline{t})^{2k-1}
          \sum_{i_1,\dots,i_k=1}^d|\overline{t}_k^{i_1,\dots,i_k}|^2}
        \Biggr)_{j\in\{1,\dots,L\},i_1,\dots,i_j=1,\dots,d}.
    \end{equation} 
\end{corollario}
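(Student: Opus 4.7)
The plan is to apply the implicit function theorem to the map
\[
  F(\lambda, t) := |\delta_\lambda(t)|^2 - \psi(|t|) = 1 + \sum_{k=1}^L \lambda^{2k}\sum_{i_1,\ldots,i_k=1}^d |t_k^{i_1,\ldots,i_k}|^2 - \psi(|t|),
\]
regarded as a function on $(0,\infty) \times T_1(\Rset^d)_L$. Since $\psi$ is $C^1$ and the summation term is polynomial in $(\lambda,t)$, the map $F$ will be jointly $C^1$; by the defining property of $\lambda(\cdot)$ we have $F(\lambda(\overline{t}),\overline{t}) = 0$.

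The main obstacle is verifying the non-degeneracy condition $\partial_\lambda F(\lambda(\overline{t}),\overline{t}) \neq 0$, and this is precisely where the hypothesis $\overline{t} \neq (1,0,\ldots,0)$ is needed. A direct differentiation yields
\[
  \partial_\lambda F(\lambda,t) = 2\sum_{k=1}^L k\lambda^{2k-1}\sum_{i_1,\ldots,i_k=1}^d |t_k^{i_1,\ldots,i_k}|^2,
\]
and the hypothesis on $\overline{t}$ forces at least one term with $k\geq 1$ to be strictly positive. I would also need to check that $\lambda(\overline{t}) > 0$ strictly; this will follow because $|\overline{t}|^2 = 1 + \sum_k |\overline{t}^k|^2 > 1$, while $\psi$ is injective with $\psi(1) = 1$, so $\psi(|\overline{t}|) > 1$, which forces $\lambda(\overline{t}) > 0$ via the defining equation.

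The implicit function theorem will then supply an open neighbourhood $U \subset T_1(\Rset^d)_L$ of $\overline{t}$ on which $\lambda$ is $C^1$, together with the identity $\nabla_t \lambda = -\nabla_t F/\partial_\lambda F$. The remaining step is a routine chain-rule calculation: using $\partial|t|/\partial t_j^{i_1,\ldots,i_j} = t_j^{i_1,\ldots,i_j}/|t|$ one finds
\[
  \partial_{t_j^{i_1,\ldots,i_j}} F = 2\, t_j^{i_1,\ldots,i_j}\Bigl(\lambda^{2j} - \frac{\psi'(|t|)}{2|t|}\Bigr),
\]
and substituting into the implicit function formula reproduces the displayed expression for $\nabla\lambda(\overline{t})$ up to standard factor and sign bookkeeping.
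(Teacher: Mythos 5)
Your proposal is correct and follows essentially the same route as the paper: both apply the implicit function theorem to $F(\lambda,t)=|\delta_\lambda(t)|^2-\psi(|t|)$ and compute the same two partial derivatives. You actually supply details the paper leaves implicit (the non-degeneracy of $\partial_\lambda F$ via $\overline{t}\neq(1,0,\dots,0)$ and the strict positivity of $\lambda(\overline{t})$), and your remark about ``sign bookkeeping'' is apt, since the implicit function theorem formula $\nabla_t\lambda=-\nabla_t F/\partial_\lambda F$ carries a minus sign not visible in the displayed expression.
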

\begin{proof}
    Consider the function $F:(0,\infty)\times T_1(\Rset^d)_L \to \Rset$ defined by $F(\lambda,t)=|\delta_{\lambda}(t)|^2-\psi(|t|)$. Its derivatives are given by the following formulas: 
    \[
      \begin{aligned}
        \partial_\lambda F(\lambda,t)
          =\sum_{k=1}^L 2k\lambda^{2k-1}\sum_{i_1,\dots,i_k=1}^d|t_k^{i_1,\dots,i_k}|^2,\\
        \partial_{t_j^{i_1,\dots,i_j}} F(\lambda,t)
          =2t_j^{i_1,\dots,i_j}\Bigl(\lambda^{2j}-\frac1{2|t|}\frac{d}{dx}\psi(|t|)\Bigr).
        \end{aligned}
    \]
    Hence, the result follows directly from the implicit function theorem.
\end{proof}

We can finally state the main property of the expected signature.

\begin{thm} \label{MainExp}
Consider a tensor normalization $\Lambda$ and let $\mu$ and $\nu$ be the laws of $(X_t)_{t\in[0,T]}$ and $(Y_t)_{t\in[0,T]}$, stochastic processes with continuous and BV trajectories. Then, $\mu=\nu$ if and only if $\E[\Lambda( S(\overline{X}))]=\E[\Lambda( S(\overline{Y}))]$
\end{thm}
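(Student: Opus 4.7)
The plan is to split the biconditional into its trivial forward direction and a converse that factorizes through three successive equalities of laws. The forward direction is a measurability argument: if $\mu=\nu$ then $\overline{X}\stackrel{d}{=}\overline{Y}$, the map $\Lambda\circ S$ sending a BV path to $T_1(\Rset^d)$ is measurable, and since its image lies in the bounded ball $\{|a|\le R\}$, the $T_1(\Rset^d)$-valued expectations are well-defined and coincide.

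For the converse, suppose $\E[\Lambda(S(\overline{X}))]=\E[\Lambda(S(\overline{Y}))]$. I would prove in turn the three equalities
\[
  \Lambda(S(\overline{X}))\stackrel{d}{=}\Lambda(S(\overline{Y})),\quad
  S(\overline{X})\stackrel{d}{=}S(\overline{Y}),\quad
  \overline{X}\stackrel{d}{=}\overline{Y}.
\]
The last two steps are structural. The injectivity of $\Lambda$, which is part of the definition of a tensor normalization, upgrades the first equality to the second; the fourth item of \cref{Prop4}, which states that the signature of the time-augmented path recovers the path uniquely (time-augmentation disposes of the reparametrization ambiguity), upgrades the second to the third and yields $\mu=\nu$.

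The heart of the matter is the first step, and the plan is to use two ingredients. First, the dilation $\delta_\lambda$ preserves group-likeness: each shuffle relation is homogeneous in the total degree, so if $g^I g^J=\sum_{K\in I\shuffle J}g^K$ then multiplying by $\lambda^{|I|+|J|}=\lambda^{|K|}$ shows that $\delta_\lambda(g)$ satisfies the same relation. Pathwise, $\Lambda(S(\overline{X}))$ is therefore still group-like, so every polynomial in finitely many of its coordinates reduces to a linear combination of coordinates, and all its multivariate moments are linear functionals of the single $T_1(\Rset^d)$-valued expectation we control. Second, the supports are uniformly bounded, so the multivariate Hausdorff moment problem on a compact box is determinate, giving equality of the finite-dimensional marginals of $\Lambda(S(\overline{X}))$ and $\Lambda(S(\overline{Y}))$. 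A standard argument on the separable Banach space $T_1(\Rset^d)$, where cylindrical sets generate the Borel $\sigma$-algebra, promotes this to equality of the full laws.

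The main obstacle will be exactly this first step of the converse: translating equality of a single $T_1(\Rset^d)$-valued expectation into equality of laws. Without the normalization the argument collapses, since the unnormalized iterated integrals need not possess any moments beyond the first. The normalization is precisely the trade-off that purchases a uniform bound at the price of an injective nonlinear reshaping, and the group-like-plus-bounded combination is what rescues moment determinacy.
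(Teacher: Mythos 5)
The paper gives no proof of \cref{MainExp}: it is stated without proof and, as announced at the start of \cref{ExpSig}, is imported from \cite{CheObe2022}, so there is no internal argument to compare against. Your proposal is a correct reconstruction of the proof in that reference --- the same three reductions (recovering the law of $\Lambda(S(\overline{X}))$ from its moments, then undoing the injective normalization, then invoking uniqueness of the signature of the time-augmented path via \cref{Prop4}), and the same key observation that the dilation $\delta_\lambda$ preserves group-likeness because the shuffle relations are homogeneous in total degree, which together with the uniform bound $|\Lambda(\cdot)|\le R$ yields moment determinacy of the finite-dimensional marginals; the only ingredient you rely on that the paper never records is the shuffle identity for signatures of BV paths itself, which is standard and available in the paper's references \cite{LCL2007,FriHai2014}.
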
 

Since we estimate the expected normalized signature by averaging the normalized signature of $K$ trajectories, we report a concentration inequality.

\begin{lemma}{(Hoeffding's inequality, \cite{BSB2003})}
 Let $Y_1,\dots,Y_n$ independent random variables such that $Y_i$ takes values in $[a_i,b_i]$ almost surely for all $i\le n$. Then for every $\sigma>0$,
 \begin{equation*}
     \Pb\Bigl[\sum_{i=1}^n(Y_i-E[Y_i])\geq \sigma\Bigr]
       \le \exp\Biggl(-\frac{2\sigma^2}{\sum_{i=1}^n(a_i-b_i)^2}\Biggr).
 \end{equation*}
\end{lemma}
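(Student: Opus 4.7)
The plan is to deploy the standard Chernoff-style argument that turns a tail probability into an exponential moment bound, then use independence to factorize, and finally invoke Hoeffding's lemma on each factor before optimizing the free parameter.

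First I would set $Z_i = Y_i - \E[Y_i]$, so that each $Z_i$ is a centered random variable taking values in $[a_i - \E[Y_i], b_i - \E[Y_i]]$, an interval of the same length $b_i - a_i$ as the original. For any $\lambda > 0$, Markov's inequality applied to the monotone transform $x \mapsto e^{\lambda x}$ yields
\[
  \Pb\Bigl[\sum_{i=1}^n Z_i \ge \sigma\Bigr]
    \le e^{-\lambda\sigma}\,\E\Bigl[\exp\Bigl(\lambda\sum_{i=1}^n Z_i\Bigr)\Bigr].
\]
Since the $Y_i$ (hence the $Z_i$) are independent, the expectation factorizes into $\prod_{i=1}^n \E[e^{\lambda Z_i}]$.

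The key ingredient is then Hoeffding's lemma: for a centered random variable $Z$ supported in an interval of length $\ell$, one has $\E[e^{\lambda Z}] \le e^{\lambda^2 \ell^2 / 8}$. I would prove this by writing $z = \frac{b - z}{b-a}a + \frac{z-a}{b-a}b$ and using convexity of $e^{\lambda z}$ to dominate it by the affine interpolation; taking expectations and using $\E[Z] = 0$ reduces the bound to a single-variable estimate of the form $e^{\varphi(u)}$ where $\varphi$ is the logarithm of a convex combination of exponentials. A Taylor expansion of $\varphi$ around $0$, together with the observation that $\varphi''(u) \le 1/4$ uniformly (since $\varphi''$ is the variance of a Bernoulli-type law supported on an interval of length $1$, and any such variance is at most $1/4$), gives $\varphi(u) \le u^2/8$, whence the claim. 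This single-variable estimate is where the core work sits and will be the main obstacle; everything else is bookkeeping.

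Combining these ingredients,
\[
  \Pb\Bigl[\sum_{i=1}^n Z_i \ge \sigma\Bigr]
    \le \exp\Bigl(-\lambda\sigma + \frac{\lambda^2}{8}\sum_{i=1}^n(b_i-a_i)^2\Bigr).
\]
Finally I would optimize the right-hand side over $\lambda > 0$, choosing $\lambda = 4\sigma / \sum_{i=1}^n (b_i - a_i)^2$, which produces the exponent $-2\sigma^2 / \sum_{i=1}^n (b_i - a_i)^2$ and proves the stated inequality. Note that $(a_i - b_i)^2 = (b_i - a_i)^2$, so the form in the statement matches.
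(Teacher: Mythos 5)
Your proof is correct. Note, however, that the paper does not prove this lemma at all: it is stated as a classical result and cited directly from Boucheron--Lugosi--Bousquet \cite{BSB2003}, so there is no in-paper argument to compare against. Your derivation --- centering, the Chernoff/Markov exponentiation $\Pb[\sum Z_i\ge\sigma]\le e^{-\lambda\sigma}\prod_i\E[e^{\lambda Z_i}]$, Hoeffding's lemma $\E[e^{\lambda Z}]\le e^{\lambda^2(b-a)^2/8}$ via convexity and the bound $\varphi''\le 1/4$, and the optimization $\lambda=4\sigma/\sum_i(b_i-a_i)^2$ giving the exponent $-2\sigma^2/\sum_i(b_i-a_i)^2$ --- is exactly the standard proof found in that reference, and all the steps check out.
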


\begin{prop}\label{KInequality}
Let $\Lambda$ be a tensor normalization and $(X_t)_{t \in [0,T]}$ a stochastic process with continuous and BV trajectories.
Consider $Y_1,\dots,Y_K$  \emph{iid} random variables with values in $T_1(\Rset^d)$ such that any $Y_i$ has the same law of the random variable $\Lambda(S(X_t))$. Then for all $\sigma>0$,
\begin{equation*}
    \Pb\Biggl[|\sum_{i=1}^K\frac{Y_i-\E[Y_i]}{K}|\geq \sigma\Biggr]
      \le \exp \Biggl(-\frac{2\sigma^2K}{(2R)^2}\Biggr).
\end{equation*}
\end{prop}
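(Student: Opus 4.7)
The plan is to deduce the claim from the scalar Hoeffding inequality just stated, once the uniform boundedness of the summands is established. By the very definition of a tensor normalization, $|\Lambda(t)|\le R$ for every $t\in T_1(\Rset^d)$; hence each $Y_i$ lies in the closed ball of radius $R$ almost surely, and by the triangle inequality combined with $|\E Y_i|\le \E|Y_i|\le R$ one has $|Y_i-\E[Y_i]|\le 2R$ a.s. This is the only probabilistic input to the argument: the $Y_i$ are iid, centered after subtracting the mean, and uniformly bounded in the Hilbert norm.

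With this bound in hand, I would reduce the vector-valued sum to a scalar one. Fix any unit vector $e$ in the Hilbert space $T_1(\Rset^d)$; by Cauchy--Schwarz the scalars $Z_i^{(e)}:=\langle Y_i-\E[Y_i],e\rangle/K$ are iid, centered, and take values in $[-2R/K,\,2R/K]$. Applying the stated Hoeffding inequality to these $K$ variables yields
\begin{equation*}
  \Pb\Bigl[\bigl\langle \tfrac{1}{K}\textstyle\sum_{i=1}^K (Y_i-\E Y_i),\,e\bigr\rangle \ge \sigma\Bigr]
    \le \exp\Bigl(-\tfrac{2\sigma^2K}{(2R)^2}\Bigr),
\end{equation*}
and a symmetric bound for the lower tail. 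Taking $e$ heuristically along the direction of $S_K:=\frac{1}{K}\sum_{i=1}^K(Y_i-\E Y_i)$ would give exactly the stated inequality, since $|S_K|=\langle S_K,S_K/|S_K|\rangle$.

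The main obstacle — and the only delicate point — is that $e=S_K/|S_K|$ is a random direction depending on the $Y_i$, so the per-direction Hoeffding bound above cannot be invoked literally with this choice of $e$. To make the reduction rigorous I would appeal to a genuinely Hilbert-space-valued Hoeffding inequality (Pinelis' inequality for bounded iid sums in a Hilbert space is the canonical reference), applied to the summands $W_i=(Y_i-\E Y_i)/K$ with $|W_i|\le 2R/K$. An alternative route is a McDiarmid bounded-differences argument for the $1$-Lipschitz map $(y_1,\dots,y_K)\mapsto |\tfrac{1}{K}\sum(y_i-\E Y_i)|$, which gives concentration of $|S_K|$ around its mean at the scale $2R/\sqrt{K}$. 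Either route yields the stated form of the bound, up to possibly a factor of two in the constants coming from the passage through the norm.
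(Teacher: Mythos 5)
Your preliminary step ($|Y_i-\E[Y_i]|\le 2R$ a.s., from $|\Lambda(t)|\le R$) is correct, and your diagnosis that the projection direction $e=S_K/|S_K|$ is random, so the fixed-direction Hoeffding bound cannot be applied to it, is exactly right. But the repair you propose is not the paper's route, and neither of your two alternatives actually delivers the inequality as stated: Pinelis' Hilbert-space Hoeffding inequality yields a bound of the form $2\exp\bigl(-\sigma^2K/(2(2R)^2)\bigr)$, with both a prefactor $2$ and a worse exponent, while the McDiarmid route concentrates $|S_K|$ around $\E|S_K|$, which is of order $R/\sqrt{K}$ and not zero, so the stated bound is not recovered for small $\sigma$. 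Since the proposition claims a specific constant, "up to a factor of two in the constants" is a genuine gap, not a cosmetic one, and both of your suggested tools are imported from outside the paper.

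The paper's own argument is far more elementary and avoids the random-direction issue entirely: it bounds $\bigl|\frac1K\sum_i(Y_i-\E Y_i)\bigr|\le\frac1K\sum_i|Y_i-\E Y_i|$ by the triangle inequality and then applies the quoted scalar Hoeffding lemma to the real-valued variables $\frac1K|Y_i-\E Y_i|\in[0,2R/K]$. That said, your instinct that the passage through the norm costs something is sound and applies to the paper as well: the variables $\frac1K|Y_i-\E Y_i|$ are nonnegative and not centered, and the quoted lemma controls $\Pb\bigl[\sum_i(Z_i-\E Z_i)\ge\sigma\bigr]$, not $\Pb\bigl[\sum_i Z_i\ge\sigma\bigr]$; dropping the positive mean $\sum_i\E Z_i$ moves the threshold the wrong way, so the paper's displayed bound, as proved, really controls the deviation of $\frac1K\sum_i|Y_i-\E Y_i|$ above its own expectation. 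In short: you identified the right difficulty but resolved it with heavier, external machinery that misses the stated constants, where the intended proof is a two-line reduction to the quoted scalar lemma.
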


\begin{proof}
  We have
  \[
    \Pb\Biggl[|\sum_{i=1}^K\frac{Y_i-E[Y_i]}{K}|\geq \sigma \Biggr]
      \le \Pb\Biggl[\sum_{i=1}^K\frac{|Y_i-\E[Y_i]|}{K}\geq \sigma \Biggr]
      \le \exp \Biggl(-\frac{2\sigma ^2K^2}{\sum_{i=1}^K(2R)^2}\Biggr),
  \]
  where the last inequality is due to Hoeffding's inequality applied to the random variables $\{\frac1K(|Y_i-\E[Y_i]|)\}_{i=1,\dots,K}$. Indeed, any $\frac1K(|Y_i-\E[Y_i]|)$ takes values in $[0,\frac{2R}{K}]$ since $\Lambda$ is a tensor normalization.   
\end{proof}
At last, we report a concrete example where the normalization is crucial.
\begin{esempio}
Consider two $\Rset^2$-valued stochastic processes 
\[
  (X_t)_{t\in[0,1]}
    =(tN_1,tN_2)_{t\in[0,1]}
    \qquad\text{and}\qquad
  (Y_t)_{t\in[0,1]}=(tM_1,tM_2)_{t\in[0,1]},
\]
where $N=(N_1,N_2)$ and $M=(M_1,M_2)$ have, respectively, density 
\[
  \begin{aligned}
    p(n_1,n_2)
      &= \prod_{i=1}^2 \frac{1}{n_i \sqrt{2\pi}}\exp(-\tfrac12\log^2(n_i)),\\
    q(m_1,m_2)
      &= p(m_1,m_2)\prod_{i=1}^2 (1+\sin(2\pi \log (m_i))).
    \end{aligned}   
\]

These two stochastic processes have the same expected signature.
The proof is an elementary consequence of the following equalities:
\begin{itemize}
  \item $S(X)^{m}=\frac1{m!}((X(1)-X(0))^{\otimes m})$, for all $m \in \Nset$;
  \item $S(X)^{m}=\frac1{m!}((Y(1)-Y(0))^{\otimes m})$, for all $m \in \Nset$;
  \item $\E[(X(1)-X(0))^{\otimes m}]=\E[(Y(1)-Y(0))^{\otimes m}]$, for all $m\in \Nset$.
\end{itemize}
\end{esempio}

\begin{oss} \label{DiffNorm}
We have already highlighted that the normalized signature of a path is, actually, the signature of the path multiplied by a constant. Indeed, $(\lambda S_1,\lambda^2 S_2,\dots,\lambda^L S_L)$ is both the normalized signature of a given path $(Z_t)_t$ and the signature of the rescaled path $(\lambda Z_t)_t$. So, the normalization procedure can be thought as a path preprocessing mechanism. We point out that normalization procedures that are well-known in the machine learning field, such as z-normalization or min-max normalization, are not able to produce a result such as \cref{MainExp}. This can be easily shown by applying them to the previous example.
\end{oss}

\newcommand{\etalchar}[1]{$^{#1}$}
\providecommand{\bysame}{\leavevmode\hbox to3em{\hrulefill}\thinspace}
\providecommand{\MR}{\relax\ifhmode\unskip\space\fi MR }
\providecommand{\MRhref}[2]{%
  \href{http://www.ams.org/mathscinet-getitem?mr=#1}{#2}
}
\providecommand{\href}[2]{#2}

\end{document}